\newtheorem{theorem}{Theorem}
\newtheorem{proposition}{Proposition}
\title{Extreme Model Compression with Structured Sparsity at Low Precision}
\begin{document}

\maketitle

\begin{abstract}
Deep neural networks (DNNs) are used in many applications, but their large size and high computational cost make them hard to run on devices with limited resources. Two widely used techniques to address this challenge are weight quantization, which lowers the precision of all weights, and structured sparsity, which removes unimportant weights while retaining the important ones at full precision.
Although both are effective individually, they are typically studied in isolation due to their compounded negative impact on model accuracy when combined.
In this work, we introduce SLOPE (\underline{S}tructured Sparsity at \underline{Lo}w \underline{P}r\underline{e}cision), a unified framework, to effectively combine structured sparsity and low-bit quantization in a principled way.
We show that naïvely combining sparsity and quantization severely harms performance due to the compounded impact of both techniques.
To address this, we propose a training-time regularization strategy that minimizes the discrepancy between full-precision weights and their sparse, quantized counterparts by promoting angular alignment rather than direct matching.
On ResNet-18, SLOPE achieves $\sim20\times$ model size reduction while retaining $\sim$99\% of the original accuracy. It consistently outperforms state-of-the-art quantization and structured sparsity methods across classification, detection, and segmentation tasks on models such as ResNet-18, ViT-Small, and Mask R-CNN.
\end{abstract}

%-------------------------------------------------------------------------
\section{Introduction}
\label{sec:intro}

Deep neural networks (DNNs) are increasingly deployed across a wide range of applications, but their growing size and computational demands present major obstacles for deployment on resource-constrained hardware.
Quantization has emerged as one of the most effective strategies to address these limitations, reducing the bit-width of weights and activations to lower memory usage and accelerate inference~\cite{gholami2021survey}.
By mapping high-precision (e.g., 32-bit floating-point) weights of a trained model to lower-precision representations (e.g., 8-bit or 4-bit), quantization significantly reduces the storage and computational cost of neural networks.
This process often involves a delicate trade-off: while lower precision leads to more compact models and faster execution, it also degrades model accuracy due to reduced representational capacity.
In practice, 4-bit quantization can provide up to 8$\times$ memory reduction compared to 32-bit weights while maintaining competitive performance for many tasks, making it an attractive near-lossless compression method \cite{gholami2021survey}.
However, in many deployment scenarios, particularly those involving large models or strict memory and latency constraints, this level of compression may still fall short \cite{zhu2024llm_quant_survey}.
Pushing the precision even further down may offer additional savings, but at what cost?
Quantizing weights below 4 bits often leads to sharp drops in accuracy, making such extreme quantization levels impractical for most real-world applications\cite{gholami2021survey,oo-icml-nagel22a,ofree-liu23w,esser2019lsq}.
This raises a natural question: can we go beyond quantization to achieve higher compression without incurring severe accuracy loss?

% Deep neural networks (DNNs) are increasingly applied in diverse applications, yet their growing size and computational demands pose considerable challenges for deployment on resource-constrained hardware. Quantization has emerged as one of the most effective approaches to address these challenges by reducing the bit-width of weights and activations, thereby decreasing memory usage and accelerating inference \cite{gholami2021survey}. Although 4-bit quantization is often considered near-lossless, its memory consumption remains non-negligible compared to more aggressive bit-width reductions such as 2-bit, where significant accuracy degradation is common due to severely limited representation capacity \cite{gholami2021survey,oo-icml-nagel22a,ofree-liu23w}. To further reduce the memory cost of 4-bit quantization and enhance inference efficiency, structured sparsity techniques like N:M sparsity \cite{bai-2023_nm_nvidia} can be employed, offering a promising balance between preserving model performance and minimizing resource usage (Fig. \ref{fig:SNMQ}).
\begin{figure}[hbpt!]
    \centering
    \begin{adjustbox}{max width=\textwidth}
    \includegraphics[width=0.7\linewidth]{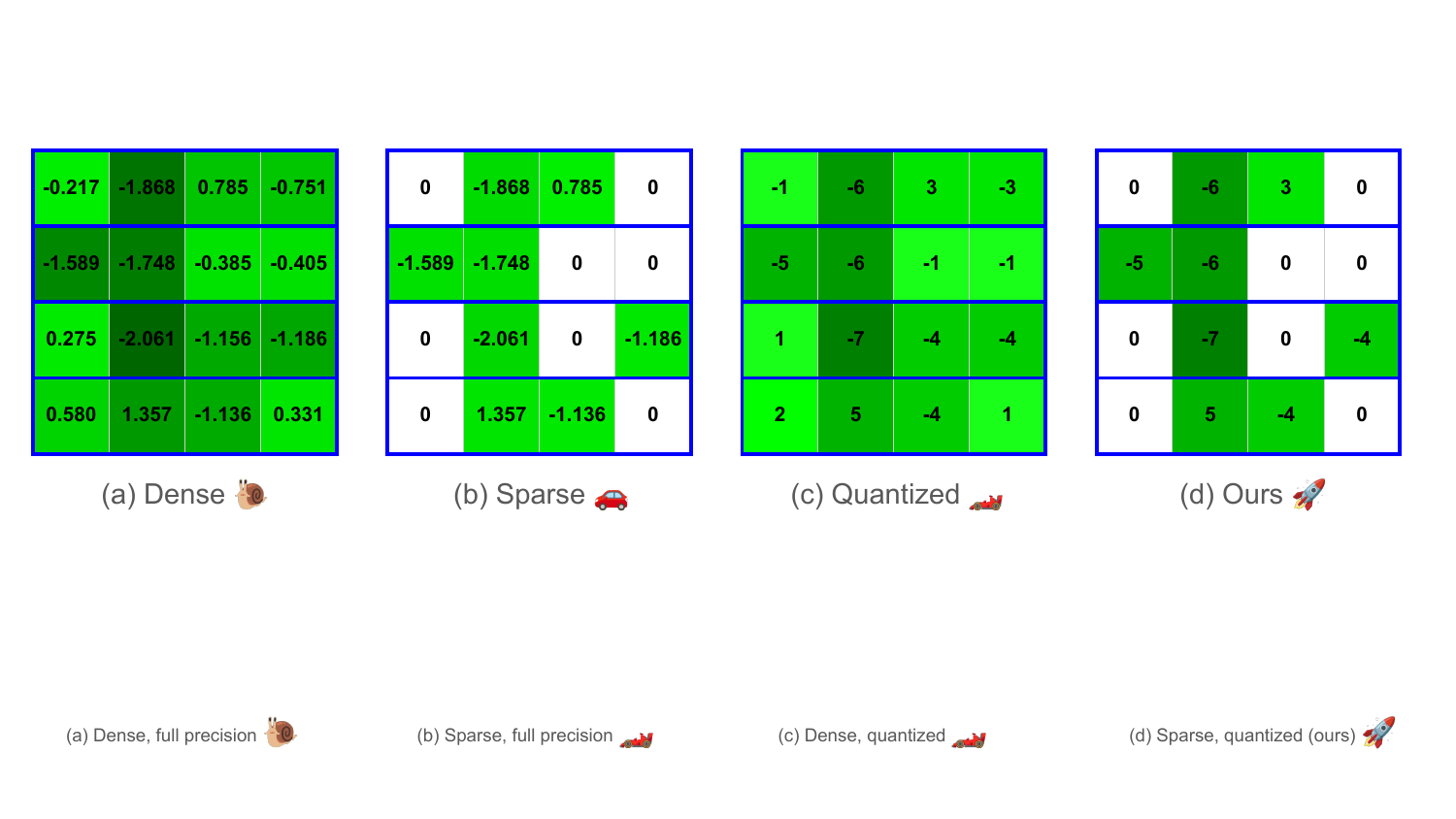}
    \end{adjustbox}
    \vspace{0.5em}
    \caption{\small{Weight matrix representations under different compression settings.  
(a) Dense, full-precision weights offer high accuracy but are computationally expensive.  
(b) Structured 2:4 sparsity (every 4 elements contain 2 non-zeros) in full precision reduces the number of multiplications.
(c) Quantization (4-bit) compresses memory usage.
(d) Structured 2:4 quantization enables much higher inference speedups and compression ratios (See the Appendix). }}
    \label{fig:SLOPE}
    \vspace{-0.5em}
\end{figure}

% Another approach to model compression is weight sparsity, which reduces model size by zeroing out a subset of the weights during training or inference.
% By eliminating less important parameters, sparse models can achieve significant memory savings with minimal loss in performance \cite{liu2018rethinking,li2016pruning}.
% A particularly effective variant is structured N:M sparsity, which enforces that exactly M out of every N consecutive weights are zero (e.g., 2:4 sparsity ensures that in every two out of four consecutive weight elements are non-zero).
Another effective approach for model compression is structured N:M sparsity, which enforces that exactly M out of every N consecutive weights are zero (e.g., 2:4 sparsity ensures that in every two out of four consecutive weight elements are non-zero) \cite{bai-2023_nm_nvidia,sparsenm_zhou_2021}.
% In addition to reducing the model’s memory footprint, this structured sparsity pattern enables higher computational speedups on modern hardware thanks to its compatibility with optimized sparse matrix operations~\cite{mishra2021accelerating,frantar2025marlin}.
Beyond reducing the model’s memory footprint, structured sparsity also enables significant computational speedups on modern hardware due to its compatibility with optimized sparse matrix operations~\cite{mishra2021accelerating,frantar2025marlin}.
However, setting the sparsity ratio too high (e.g., 2:16) removes too much information from the original weights and leads to degraded model accuracy, thus limiting the extent of effective compression. 
% \ND{this section needs more citations to support some of the claims $\checkmark$}

% Like quantization, structured sparsity has also attracted attention for its potential to reduce model size and computational overhead. N:M sparsity (e.g., 2:4 sparsity) enforces that only 2 out of every 4 elements in a weight vector are non-zero (Fig.~\ref{fig:SNMQ}b, \ref{fig:SNMQ}d), which yields significant memory savings and computational speedups~\cite{mishra2021accelerating}. For example, 4-bit structured 2:4 quantization can deliver up to $4\times$ speedup~\cite{frantar2025marlin}.

In this work, we push the limits of model compression and propose to combine structured sparsity with low-bit quantization in a unified framework (see Fig~\ref{fig:SLOPE}-d).
These two techniques reduce model size through fundamentally different mechanisms, sparsity removes weights~\cite{mishra2021accelerating}, while quantization reduces their precision, and are therefore expected to be complementary~\cite{hubara2021accelerated_tmask}.
Yet, in our experiments, we find that applying both simultaneously often leads to a significant drop in accuracy, due to their compounded impact on model capacity.
While finetuning the resulting sparse and quantized model can partially recover performance, a noticeable gap remains compared to the original full-precision model \cite{chmiel2023minimum}.
To address this challenge, we introduce SLOPE (\underline{S}tructured Sparsity at \underline{Lo}w \underline{P}r\underline{e}cision), a novel training framework that integrates N:M structured sparsity with low-bit quantization in a principled way.
%At the core of SLOPE is a regularization strategy applied during finetuning that minimizes the discrepancy between the original full-precision weights and their sparse, quantized versions.
At the core of SLOPE is a regularization strategy that minimizes the discrepancy between the original full-precision weights and their sparse, quantized counterparts.
In particular, our regularizer promotes directional alignment between the original and sparse quantized weights, serving as a good prior for high-performance models while allowing for flexibility in sparse quantized weight optimization.
By aligning the original weight vectors with the compressed ones, SLOPE enables the model to retain higher accuracy while enjoying the memory and computational benefits of extreme compression.
% \ND{this section needs more citations to support some of the claims}

We validate SLOPE across a range datasets such as ImageNet \cite{deng2009imagenet} and MS-COCO~\cite{lin2014mscoco}, with ResNet18~\cite{he2016resnet}, DeiT~\cite{deit_touvron2020training} and Mask-RCNN~\cite{he2017maskrcnn} models.
SLOPE consistently outperforms state-of-the-art sparse quantization methods under 2:4 structured sparsity. On ResNet-18, it boosts the accuracy of 4-bit sparse model from 68.36\% to 71.11\%, exceeding the full-precision baseline. On DeiT-small, it achieves 80.7\% Top-1 accuracy, surpassing all existing 2:4 sparse methods. For object detection, SLOPE improves Mask R-CNN box mAP from 37.80 to 40.83 under 4-bit 2:4 sparsity, narrowing the gap to the full-precision baseline (41.0 mAP). These results confirm that SLOPE enables highly compressed models without compromising performance.
To this end, in this work we make the following contributions:
\begin{itemize}
\item We introduce SLOPE, a unified framework that combines structured N:M sparsity and low-bit quantization for extreme model compression.
\vspace{-0.5em}
\item We propose a novel directional regularization strategy that stabilizes training under aggressive compression by aligning compressed and full-precision weight vectors.
\vspace{-0.5em}
\item We demonstrate that SLOPE outperforms state-of-the-art quantization and structured sparsity baselines across different models and computer vision tasks.
\end{itemize}

% However, our analysis reveals that directly combining structured sparsity with low-bit (\(\le 4\)-bit) quantization can cause significant performance drops, and the quantized weights deviate excessively from their full-precision counterparts. To address this issue, we decompose the sources of quantization-induced weight divergence and derive an optimization bound on this deviation. Based on these insights, we propose a tailored regularization term that explicitly reduces the cosine distance between sparsely quantized weights and their full-precision equivalents, thus preserving representational capacity under constrained bit-widths. By aligning weight vectors before and after quantization, the model maintains higher accuracy while retaining the memory and computational advantages of N:M sparsity.

% We conduct experiments on ResNet-18 \cite{he2016resnet} and ViT-small\cite{wu2019detectron2} using the ImageNet dataset \cite{deng2009imagenet}, as well as on Mask R-CNN \cite{he2017maskrcnn} using the MS-COCO dataset \cite{lin2014mscoco}. Experimental results show that our method outperforms naive combinations of N:M sparsity and low-bit quantization, achieving near-lossless accuracy under 4-bit structured 2:4 quantization settings.

\section{Related Work}
% In this section we cover related work on weight quantization and structured sparsity.

\subsection{Quantization}

Quantization reduces the precision of model parameters and activations, typically from 32-bit floating point to lower-bit formats such as 4-bit or 2-bit, to decrease memory usage and accelerate inference~\cite{gholami2021survey,esser2019lsq,zhou2016dorefa,rastegari2016xnor,hubara2016binarized}.  
A central goal is to ensure that the quantized weights \(\widehat{\mathbf{W}}\) closely approximate the full-precision weights \(\mathbf{W}\), thereby preserving model performance.  

Many works aim to minimize this discrepancy.  
For instance, \cite{li2016twn,oo-icml-nagel22a,ofree-liu23w} directly minimize the \(L_2\) distance between \(\widehat{\mathbf{W}}\) and \(\mathbf{W}\).  
Others \cite{bin_tiantianhan2021improving,kim2020position,liu2023hyperspherical} promote quantization-aware training by encouraging \(\mathbf{W}\) to lie near quantization bin centers.  
Lin et al.~\cite{lin2020rotated} further enhance quantization by learning a set of rotation matrices that iteratively align \(\mathbf{W}\) and \(\widehat{\mathbf{W}}\), yielding significant improvements in binary settings. More quantization works are discussed in the work of \citet{gholami2021survey}.

\subsection{Structured N:M Sparsity}

Structured N:M sparsity is a specific and increasingly popular form of structured sparsity (i.e., the type of sparsity where the weights are sparsified according to a specific pattern).  
It enforces a fine-grained constraint in which only \(N\) weights are retained out of every \(M\) consecutive elements.  
This approach strikes a balance between the flexibility of unstructured sparsity and the hardware efficiency of coarse-grained patterns.  
Notably, NVIDIA GPUs support 2:4 sparsity at inference time, enabling practical acceleration~\cite{bai-2023_nm_nvidia}.  
With careful fine-tuning, such structured sparsity can retain performance comparable to dense models.  

Several techniques have been proposed to train N:M sparse models effectively.  
Zhou et al.~\cite{sparsenm_zhou_2021} introduce SR-STE, a straight-through estimator adapted for N:M constraints, enabling sparse models to be trained from scratch with minimal accuracy loss and achieving up to 2\(\times\) speedup on NVIDIA A100 GPUs.  
LBC~\cite{zhang2022learning} addresses the combinatorial nature of N:M sparsity via a divide-and-conquer approach that assigns learnable scores to weight subsets, enabling efficient mask learning.  
STEP~\cite{lu2023step} proposes an Adam-aware method for mask learning, consisting of a preconditioning phase for estimating reliable gradient variances followed by a sparsity-inducing optimization phase.  

Beyond weights, Chmiel et al.~\cite{chmiel2023minimum} apply N:M sparsity to gradients during training, introducing a minimum-variance unbiased estimator (MVUE) that supports 1:2 or 2:4 sparse gradients, reducing computation without harming convergence.  
Finally, S-STE~\cite{hu2024sste} proposes a continuous pruning framework for pretraining sparse models.  
It uses a projection-based pruning function and fixed rescaling of sparse weights to produce efficient 2:4 sparse models that closely match the performance of their dense counterparts.  

Recent efforts have explored sparse quantization, which jointly applies sparsity and quantization to maximize model compression. 
Harma et al.~\cite{harma2024effective} highlight the importance of aligning sparse structures with quantization to mitigate compounded degradation and propose a unified framework that jointly optimizes both constraints to maintain accuracy.  
Similarly, Guo et al.~\cite{guo2024_JSQ} identify key challenges in sparse quantization, such as the tendency of sparsification to preserve outliers that complicate downstream quantization.  
In contrast, SLOPE focuses on aligning the full-precision weights with their sparse quantized counterparts, yielding complementary improvements.

\section{Preliminaries}
In this section we introduce formal definitions of weight quantization, structured N:M sparsity, and cover common ways to measure discrepancy between original and compressed weights of a neural network. In this work,
% \\
% Before we move forward, however, let us define the basic computational units of neural network - linear layers.
a linear layer is defined as: 
$
    \mathbf{y}={\mathbf{W}}^\top\mathbf{x},
$
where $\mathbf{x}\in\mathbb{R}^{m\times{1}}$ denotes the input vector and ${\mathbf{W}} \in \mathbb{R}^{m\times{n}}$ denotes the weight matrix with $i = 1, . . . , n$. $\mathbf{{w}}_i\in{\mathbf{W}}$ denotes the $i$-th vector of ${\mathbf{W}}$. $\mathbf{y}\in\mathbb{R}^{n\times{1}}$ is the layer output. 

\paragraph{Quantization}
In this paper, by quantization we mean rounding float values to their nearest lower-precision counterpart.
This corresponds to the LSQ \cite{esser2019lsq} formalism:
\begin{equation}
\label{eq_quant}
\scalebox{0.7}{$
\widehat{\mathbf{W}} = q(\mathbf{W}; s, b) 
= s \Big[\operatorname{clamp}\Big(\Big\lfloor \frac{\mathbf{W}}{s} \Big\rceil \,;\, -\text{Q}_N,\;\text{Q}_P\Big)\Big],
$}
\end{equation}
where \(\mathbf{W}\) and \(\widehat{\mathbf{W}}\) denote the full-precision and quantized weights, respectively. \(s\) is a learnable parameter for scaling weights, \(b\) represents the bit-width, $\lfloor\rceil$ is the rounding operator. For {activation quantization}, the quantization range is typically defined as 
$
Q_N = 0, \quad Q_P = 2^{b-1},
$
which corresponds to an {asymmetric unsigned mapping} suitable for non-negative activations (e.g., ReLU). 
In contrast, for {weight quantization}, the range is defined as 
$
Q_N = -2^{b-1}, \quad Q_P = 2^{b-1} - 1,
$
which corresponds to a {symmetric signed mapping} centered around zero, reflecting the approximately zero-mean distribution of weights.
 % {In the context of this paper, unless otherwise specified, ``quantization'' refers to ``weight-activation quantization''.}

\paragraph{Structured N:M Sparsity} 
Structured N:M sparsity keeps N non-zero and prunes M-N weights to zero in every M consecutive elements (e.g., N = 2, M = 4; Fig.~\ref{fig:SLOPE}b). Let \(\mathsf{w}_{N:M} \subset \mathbf{W}\) be a block of M consecutive elements in \(\mathbf{W}\), and let \(\widetilde{\mathsf{w}}_{N:M} \subset \widetilde{\mathbf{W}}\) be the corresponding block in \(\widetilde{\mathbf{W}}\). Then each element \(w_i\in \mathsf{w}_{N:M}\) is mapped to:

% \begin{equation}
%    \widetilde{w}_i = S(w_i; N, M)=
%        \begin{cases}
%        w_i, &\text{if } |w_i| \ge \xi,\\
%        0,   &\text{if } |w_i| < \xi,
%        \end{cases}
%    \quad \text{for } i =1,2,\ldots,M,
%   \label{equ:2}
% \end{equation}
\begin{equation}
\label{equ:2}
\scalebox{0.7}{$
   \widetilde{w}_i = S(w_i; N, M)=
       \begin{cases}
       w_i, &\text{if } |w_i| \ge \xi,\\
       0,   &\text{if } |w_i| < \xi,
       \end{cases}
   \quad \text{for } i =1,2,\ldots,M
$}
\end{equation}

\noindent where \(\xi\) is the N-th largest absolute value in the set \(\{ |w_1|, |w_2|, \dots, |w_M| \}\).
Essentially, the N:M sparsification operation zeroes out \((M - N)\) smallest elements in each M-element group, while keeping the remaining N values intact.

\paragraph{Weight Deviation Measures}
There are multiple ways to measure the deviation of the compressed weights with respect to the original ones.
Some common measures include L$_2$, L$_1$ and sometimes cosine distance, which are typically used to to get the overall magnitude (or direction) of change.
Another application-specific measure is Signal-to-Quantization-Noise Ratio (SQNR). 
It captures how much \(\widehat{\mathbf{W}}\) deviates from \(\mathbf{W}\) in a way that matters most in quantization.  
Formal definition of QSNR is as follows:
   \begin{equation}
   \label{eq:sqnr}
   \scalebox{0.7}{$
   \mathrm{SQNR}
   \;=\;
   10 \log_{10}\!\Bigl(
     \frac{
       \frac{1}{n}\sum_{i=1}^n \|\mathbf{w}_i\|^2
     }{
       \frac{1}{n}\sum_{i=1}^n \|\mathbf{w}_i - \hat{\mathbf{w}}_i\|^2
     }
   \Bigr).
   $}
   \end{equation}
It measures the ratio between the maximum nominal signal strength and the quantization error. Higher SQNR means lower quantization error {\cite{kuzmin2023pruning,lin2016fixed,torch_sqnr-2024}.

\section{Our Approach}
In this section we introduce SLOPE, our approach for combining structured sparsity with quantization.
We elaborate on the design of our regularization function and detail the training and inference procedure of SLOPE.
Yet, to motivate the need for our approach, we start with a simple experiment where quantization and sparsity are combined naively.
\begin{table}[h]
\centering
\small
\begin{adjustbox}{max width=0.8\textwidth}

\begin{tabular}{l|cc|cc|cc}
\toprule
\multirow{2}{*}{Metric} & \multicolumn{2}{c|}{Full-precision} & \multicolumn{2}{c|}{4-bit Quant} & \multicolumn{2}{c}{2-bit Quant} \\
                        & w/o Sparse & + 2:4 Sparse           & w/o Sparse & + 2:4 Sparse         & w/o Sparse & + 2:4 Sparse \\
\midrule
Test Acc. (\%) $\uparrow$ & 69.76 & \textbf{69.96} & 71.10 & 68.36 & 67.60 & 64.47 \\

Cosine $\uparrow$       & NA         & \textbf{0.975 $\pm$ 0.01} & 0.948 $\pm$ 0.01 & 0.919 $\pm$ 0.01 & 0.878 $\pm$ 0.03 & 0.831 $\pm$ 0.03 \\
SQNR (dB) $\uparrow$    & NA         & \textbf{14.74 $\pm$ 0.98} & 10.51 $\pm$ 1.01 & 8.43 $\pm$ 1.04  & 7.54 $\pm$ 1.08  & 4.84 $\pm$ 1.13 \\
\bottomrule
\end{tabular}
\end{adjustbox}
\vspace{0.3cm}
\caption{\small{Impact of combining structured 2:4 sparsity and quantization on top-1 ImageNet accuracy of ResNet‐18. Cosine and SQNR depict weight deviations from the full-precision weights.}}
\label{tab_drifts}
\vspace{-0.7cm}
\end{table}

% \begin{table}[ht!]
%     \centering
%     \begin{adjustbox}{max width=0.7\textwidth}
%     \begin{tabular}{lccc}
%     \toprule
%        \textbf{Settings} & \textbf{Test Accuracy} & \textbf{Cos }$\pm{std.}$ & \textbf{SQNR} $\pm{std.}$ \\ 
%         \midrule
%           Full-precision & {69.76} & NA. &NA.\\ 
%           Full-precision + 2:4 Sparsity & \textbf{69.96} & \textbf{0.975 $\pm$  0.01} &\textbf{14.74 $\pm$ 0.98}  \\ 

%         \midrule
%           4-bit Quant & {71.10} & 0.948 $\pm$ 0.01&10.51 $\pm$ 1.01\\ 
%           4-bit Quant + 2:4 Sparsity& 68.36 & 0.919 $\pm$ 0.01 & 8.43 $\pm$ 1.04 \\
%         \midrule
%          2-bit Quant & {67.60} & 0.878 $\pm$ 0.03&7.54 $\pm$ 1.08\\ 
%           2-bit Quant + 2:4 Sparsity & 64.47 & 0.831 $\pm$ 0.03 & 4.84 $\pm$ 1.13 \\ 
%         \bottomrule
        
%     \end{tabular}
%     \end{adjustbox}
    
%     \caption{\small{Impact of combining stuctured 2:4 sparsity and quantization on top-1 ImageNet accuracy of ResNet‐18. Cos and SQNR depict weight deviations from the full-precision weights.}}
%     \vspace{-2em}
%     \label{tab_drifts}
% \end{table}

\subsection{Our Motivation: Weight Deviation in Sparse Quantization}
Structured sparsity and weight quantization achieve compression through fundamentally different mechanisms: quantization reduces the numerical precision of weights, while sparsity retains only a subset of weights. A natural idea is to combine these techniques: first applying sparsity, then quantizing the non-zero weights, followed by light fine-tuning to adapt the surviving values while preserving the sparsity pattern. This approach, known as \emph{sparse quantization}~\cite{mishra2021accelerating}, intuitively leverages the strengths of both methods. However, when applied to models such as ResNet-18, we observe a significant drop in accuracy compared to using sparsity or quantization alone, particularly at lower bit-widths (see Table~\ref{tab_drifts}). We hypothesize that this degradation arises from the compounded distortion induced by both compression techniques. This is supported by angular deviation and SQNR analyses, which show markedly larger weight discrepancies when sparsity and quantization are combined. These findings highlight the importance of controlling both magnitude and directional deviations of weight vectors to preserve accuracy in sparse low-bit quantization, and motivate the need for a principled solution.

% We apply structured 2:4 sparse quantization (weights and activations are quantized) to a pretrained ResNet-18 model (69.76\% Top-1 accuracy on ImageNet). %Several key observations emerge from Table~\ref{tab_drifts}: i) applying 2:4 sparsity alone has minimal impact on cosine similarity and SQNR, while combining it with quantization (``2:4 $n$-bit'') significantly degrades both metrics, indicating greater weight distortion and accuracy loss; ii) the high standard deviation in cosine similarity reveals substantial directional instability, which correlates negatively with both SQNR and accuracy; iii) reducing the bit-width further amplifies this degradation. \textbf{These results highlight the importance of controlling both magnitude and directional deviations of weight vectors to preserve accuracy in sparse low-bit quantization.}
% Table~\ref{tab_drifts} reveals three key points: (i) 2:4 sparsity alone has little effect on cosine similarity and SQNR, but combining it with quantization significantly degrades both; (ii) high variance in cosine similarity suggests directional instability, negatively correlated with SQNR and accuracy; (iii) lower bit-widths further amplify this degradation. 

% The above results underscore the need to control both magnitude and directional deviations to maintain accuracy in sparse low-bit quantization. In this section, we proposes a directional regularization term that explicitly penalizes large deviations.

\subsection{SLOPE: Structured Sparsity at Low Precision}
% \paragraph{Decomposition of Weight Deviation} 
% To recover the performance lost when combining structured sparsity and quantization due to the weight discrepancy, we propose a method that explicitly minimizes this discrepancy.
% Precisely, we add a regularizer during the finetuning stage that ensures that the full-precision weights $\mathbf{W}$ stay ``close'' to the sparse quantized weights $\widehat{\mathbf{W}}$.
To address the performance drop from combining structured sparsity and quantization, we add a regularizer during fine-tuning that explicitly encourages the full-precision weights $\mathbf{W}$ to stay close to the sparse quantized weights $\widehat{\mathbf{W}}$. The optimization process is formulated as:
\begin{equation}
\label{eq:loss}
\scalebox{0.7}{$
\min_{{\mathbf{W}}} J(\mathbf{{W}}) = L(\widehat{\mathbf{W}}) + \lambda L_{reg}(\mathbf{W}, \widehat{\mathbf{W}}),
$}
\end{equation}
where \(
\widehat{\mathbf W}=q\!\bigl(S(\mathbf W;N,M);s,b\bigr)
\), $L()$ denotes a standard objective function,  and $\lambda$ is the regularizer strengths empirically set based on the loss value of $L()$ to make sure $L_{reg}$ and $L$ are at the same scale. The $L_{reg}$ is defined by: 
\begin{equation} 
    \label{eq:ltr}
    \scalebox{0.7}{$
    L_{reg}(\mathbf{W},\mathbf{\widehat{W}})=\frac{1}{n}\sum_{i=1}^{n}\left(1-\texttt{cos}(\mathbf{w}_i,\mathbf{\hat{w}}_i)\right) ,
    $}
\end{equation}
where $\texttt{cos}()$ denotes calculating the cosine similarity. Minimizing $L_{reg}$ reduces the cosine distance (angular deviation) between $\mathbf{w}_j$ and $\mathbf{\hat{w}}_j$. The closer the distance between $\mathbf{w}_j$ and $\mathbf{\hat{w}}_j$ is, the smaller the deviation $\theta$ is during the sparse quantization.

% An ideal regularizer $L_{reg}$ should be able to bring the discrepancy to 0 given sufficient weight precision, so $L_2$ or $L_1$ penalty seems like an ideal choice~\cite{bin_tiantianhan2021improving}.
% However, in case of structured sparsity, achieving perfect alignment is not possible% 就像我们表一种展示的一样, 引入结构化稀疏后, weight的方向发生了剧烈的改变, 然而, $L_2$ 或者$L_1$ penalty term并不能有效的优化角度.
% In fact, since the only the non-zero elements are amenable to optimization, minimizing $L_{2}(\mathbf{W}, \widehat{\mathbf{W}})$ can be ineffective when the sparsity is enabled, as we show in our experiments (Table \ref{tab:ablation_quant}).

An ideal regularizer $L_{\text{reg}}$ should, in principle, reduce the discrepancy between $\mathbf{W}$ and $\widehat{\mathbf{W}}$ to zero given sufficient weight precision, making $L_1$ or $L_2$ penalties appealing choices~\cite{bin_tiantianhan2021improving}. However, in the presence of structured sparse quantization, perfect alignment is no longer achievable. As shown in our results, the imposed low-precision sparse pattern significantly alters the weight direction, and standard $L_1$/$L_2$ penalties fail to effectively constrain angular deviation. As demonstrated in our ablation results (Table~\ref{tab:ablation_quant}), minimizing $L_2(\mathbf{W}, \widehat{\mathbf{W}})$ becomes ineffective under low-precision sparsity.

To this end, we propose SLOPE, which relaxes the regularization objective and instead optimize an upper bound that does not impose per-element constraints on the weight matrix.
In the structured sparsity setting, we can show that the $L_2$ distance is upper bounded by:
\begin{equation}
\label{eq:angle_app}
\scalebox{0.7}{$
% \|\mathbf{w}\|_{2}^{2}\sin^{2}\theta
%   \;\;\le\;\;
%   \|\mathbf{w}-\hat{\mathbf{w}}\|_{2}^{2}
%   \;\;\le\;\;
%   2\,\|\mathbf{w}\|_{2}^{2}(1-\cos\theta),
% \underbrace{\|\mathbf{w}\|^2_2 \sin^2\theta}_{\text{Lower bound}} \leq 
\|\mathbf{w} - \hat{\mathbf{w}}\|^2_2 \leq \underbrace{2\|\mathbf{w}\|^2_2(1 - \cos\theta)}_{\text{Upper bound}}.
$}
\end{equation}
% \begin{equation}
% \scalebox{0.7}{$
%     \|\widehat{\mathbf{W}} - \mathbf{W'} \| = 2 \mathbf{W'} (1 - \cos(\theta)),
% $}
% \end{equation}
% where $\widehat{\mathbf{W}}$ is the sparse-quantized weight, $\mathbf{W'}$ is the full-precision weight, and $\theta$ is the angle between them.
% Crucially, the upper bound only depends on the angle between the weights, and not their absolute values.
% Thus, optimizing this objective promotes directional alignment of the full-precision weights and compressed weights, without constraining the individual values of the weights to sub-optimal values.
% SLOPE's formulation allows us to increase the alignemnt between the compressed and original weights and to recover most of the performance lost due to the compression, as demonstrated in our experiments. Thus, minimizing the upper bound is a provably effective strategy for reducing quantization error in sparse regimes. (See Appendix for derivation and geometric interpretation.)
While the upper bound scales with the weight norm, its minimization is driven by the angular term only.
This regularizer promotes directional alignment without forcing sub-optimal individual weight values.
SLOPE leverages this property to recover most of the lost accuracy under compression, making angular regularization an effective strategy for sparse quantization. (See Appendix for derivation and geometric interpretation.)

% Crucially, the upper bound depends only on the angle between weights, not their magnitudes. 

% \paragraph{Angular Regularization Term}
% % \subsection{Angular Regularization Term}
% Given a standard objective function $L()$, we formulate the optimization process of sparse quantization as:
% \begin{equation}
% \label{eq:loss}
% \scalebox{0.7}{$
% \min_{\mathbf{W}} J(\mathbf{W}) = L(\widehat{\mathbf{W}}) + \lambda L_{reg}(\mathbf{W}, \widehat{\mathbf{W}}),
% $}
% \end{equation}
% where $\lambda$ is a regularization coefficient empirically set based on the loss value of $L()$ to make sure $L_{reg}$ and $L$ are at the same scale. The $L_{reg}$ is defined by: 
% \begin{equation} 
%     \label{eq:ltr}
%     \scalebox{0.7}{$
%     L_{reg}(\mathbf{W},\mathbf{\widehat{W}})=\frac{1}{n}\sum_{i=1}^{n}\left(1-\texttt{cos}(\mathbf{w}_i,\mathbf{\hat{w}}_i)\right) ,
%     $}
% \end{equation}
% where $\mathbf{\widehat{W}}=q(S(\mathbf{W};N,M),s,b)$, $\texttt{cos}()$ denotes calculating the cosine similarity. Minimizing $L_{reg}$ reduces the cosine distance (angular deviation) between $\mathbf{w}_j$ and $\mathbf{\hat{w}}_j$. The closer the distance between $\mathbf{w}_j$ and $\mathbf{\hat{w}}_j$ is, the smaller the deviation $\theta$ is during the sparse quantization. As shown in Table~\ref{tab_drifts_snmq}, introducing $L_{reg}$ can significantly increase the cosine similarity, SQNR, and the top-1 test accruacy on ImageNet dataset. 

\paragraph{SLOPE Implementation: Going Forward and Backwards}
Here we detail how the forward and backward passes are performed, in the presence of quantization and sparsification.
During the forward pass, following \citet{harma2024effective}, we first sparsify the full-precision
weights and then quantize them,
\(
\widehat{\mathbf W}=q\!\bigl(S(\mathbf W;N,M);s,b\bigr).
\)The sparse quantized weights are optimized with the straight-through
estimator (STE) \cite{bengio2013estimating,esser2019lsq,lcq_yamamoto2021learnable,sparsenm_zhou_2021,oo-icml-nagel22a}:
% \begin{equation} 
% \label{eq:ste}
% \left\{
% \begin{aligned}
% &\widehat{\mathbf{W}}_t = q\big(S(\mathbf{W}_t;N, M); s, b\big),\\
% &\mathbf{W}_{t+1} = \mathbf{W}_t - \frac{\partial L(\widehat{\mathbf{W}}_t)}{\partial \widehat{\mathbf{W}}_t},
% \end{aligned}
% \right.
% \end{equation}
\begin{equation}
\label{eq:ste}
\scalebox{0.7}{$
\left\{
\begin{aligned}
&\widehat{\mathbf{W}}_t = q\big(S(\mathbf{W}_t;N, M); s, b\big),\\
&\mathbf{W}_{t+1} = \mathbf{W}_t - \frac{\partial L(\widehat{\mathbf{W}}_t)}{\partial \widehat{\mathbf{W}}_t},
\end{aligned}
\right.
$}
\end{equation}
where \(t\) represents the training iteration and \(\frac{\partial L(\widehat{\mathbf{W}}_t)}{\partial \widehat{\mathbf{W}}_t}\) is the approximated gradient.

% \section{Discussion}
% Although some studies have attempted to apply sparse quantization to large models, the exploration in this area remains far less developed compared to the depth of research on quantization techniques in CV tasks, primarily due to computational resource limitations. Additionally, given the wide variety of large language models, researchers with access to significant computational resources often tend to conduct experiments on larger-scale models to demonstrate the effectiveness of their methods \cite{harma2024effective,guo2024_JSQ}. However, in traditional CV quantization methods, the larger the model, the less the performance loss from compression.

% Therefore, limiting the application of sparse quantization techniques solely to large language models, without including experiments on smaller-scale models, hinders the ability to objectively evaluate the effectiveness of these methods. To address this issue, our work applies the proposed methods not only to traditional CV model compression but also to smaller-scale large language models. We conduct extensive experiments to provide a more comprehensive evaluation of practical effectiveness of our method.

% 4.1 Experimenal Setup (focus on mod-
% els, datasets, metrics, baselines), 4.2 Comparison to SoTA (4.2.1 Classification, 4.2.2
% Detection/Segmentation), 4.3 Analysis (4.3.1 Correlation of weight similarity vs per-
% formance - Table 2, 4.3.2 Ablation cos vs L2? (In progress)
\section{Experiments}
In this section, we conduct experiments across a diverse set of models and tasks to demonstrate the generalization ability of our method. We evaluate ResNet-18~\cite{he2016resnet} and DeiT-small~\cite{deit_touvron2020training} on ImageNet~\cite{deng2009imagenet} for classification, and Mask R-CNN~\cite{wu2019detectron2} on MS-COCO~\cite{lin2014mscoco} for detection and segmentation. Our study covers both weight-only and weight-activation quantization, combined with 2:4 structured sparsity. We also explore extreme sparse configurations such as 2:8 and 2:16. Evaluation metrics include Top-1 accuracy on ImageNet and mean Average Precision (mAP)~\cite{wu2019detectron2} on MS-COCO. Additional training details (e.g., overhead and training time) and sparse-only results are provided in the Appendix.

\subsection{Image Classification}

We evaluate SLOPE against SoTA compression methods that use quantization and structured sparsity (i.e., ASP~\cite{mishra2021accelerating}, SR-STE~\cite{sparsenm_zhou_2021}, and MVUE~\cite{chmiel2023minimum}), or quantization alone (i.e., Quant~\cite{bin_tiantianhan2021improving}).
We can see that quantization generally degrades performance when combined with structured sparsity.
However, for {ResNet18}, {SLOPE} achieves 71.11\% Top-1 accuracy under 4-bit quantization, exceeding the full-precision baseline (69.76\%), while using memory close to 2-bit precision. 
While it may seem surprising, sparse models are generally known to outperform their dense counterparts thanks to the sparsity-induced regularization, yet, we are the first to show this result under quantization!
For {ResNet50}, {SLOPE} also consistently outperforms other methods. It reaches 75.93\% Top-1 accuracy in the 4-bit setting and 72.30\% with 2-bit quantization, both of which are significantly better than other baselines. Notably, even under aggressive 2-bit compression, SLOPE maintains competitive accuracy compared to 4-bit results of prior work.
This demonstrates SLOPE’s strong generalization ability and robustness under extreme quantization.

\begin{table}[ht]
    \centering
    \begin{adjustbox}{max width=0.7\textwidth}
    \begin{tabular}{l|cc|cc|cc}
    \toprule
    \multirow{2}{*}{\textbf{Method}} 
    & \multicolumn{2}{c|}{\textbf{FP}} 
    & \multicolumn{2}{c|}{\textbf{4-bit}} 
    & \multicolumn{2}{c}{\textbf{2-bit}} \\
    \cmidrule(lr){2-3} \cmidrule(lr){4-5} \cmidrule(lr){6-7}
    & ResNet-18 & ResNet-50 
    & ResNet-18 & ResNet-50 
    & ResNet-18 & ResNet-50 \\
    \midrule
    Quant              & 69.76         & 76.13         & 71.10         & 76.70         & 67.60         & 73.70 \\
    ASP                & 69.90$^\dagger$ & 76.80$^\dagger$ & 68.36         & 74.70         & 64.47         & 71.20 \\
    SR-STE             & 71.20$^\dagger$ & 77.00$^\dagger$ & 69.23         & -             & 63.71         & -     \\
    MVUE               & 70.6             & 77.12             & 67.22         & -             & -             & -     \\
    \textbf{SLOPE}     & \textbf{71.23}$^\dagger$ & \textbf{77.24}$^\dagger$ & \textbf{71.11} & \textbf{75.93} & {67.59} & {72.34} \\
    \bottomrule
    \end{tabular}
    \end{adjustbox}
    \vspace{0.3cm}
    \caption{\small{Comparison of different structured 2:4 sparse quantization methods on ResNet models under various bit-widths. ${\dag}$ denotes training-from-scratch. ``Quant'' denotes quantization only.}}
    \label{tab:resnet_slope_main}
    \vspace{-0.3cm}
\end{table}

In Fig.~\ref{fig:slope_vs_all}, we plot SLOPE's performance in comparison to other baselines under different quantization bit-widths and compression levels.
It is clear that SLOPE consistently outperforms (or performs on par) all sparsity- and/or quantization methods across 8-bit, 4-bit, and 2-bit settings, with particularly large gains at lower precisions. Fig.~\ref{fig:slope_vs_all} shows top-1 accuracy and compression savings against FP32, where SLOPE achieves the best accuracy under aggressive compression (up to 93.75\%).

\vspace{-0.8em}

\paragraph{DeiT-small}

For DeiT-small on ImageNet, we evaluate 2:4 structured sparsity under 2-bit and 4-bit quantization. Table~\ref{tab_deit_nbit} shows results with quantized activations (A16/A4/A2), denoted as A16/W4, A4/W4, and A2/W2. A16 denotes activation quantization with bfloat16 precision. SLOPE consistently outperforms the baseline, especially in low-bit settings. Under the challenging 2-,4- and 8-bit configurations, SLOPE achieves consistently outperforms the baseline, by up to 1.8\%.

\begin{figure}[ht]
    \centering
    \begin{minipage}{0.48\textwidth}
        \centering
        \begin{adjustbox}{max width=\textwidth}
        \begin{tabular}{lccccc}
        \toprule
        \textbf{Method} & \textbf{A16/W4} & \textbf{A16/W2} & \textbf{A8/W8} & \textbf{A4/W4} & \textbf{A2/W2} \\
        \midrule
        {Dense}       & 80.87 & 77.34 & 79.56 & 80.33 & 75.72 \\
        \midrule
        {ASP}         & 79.28 & 75.76 & 78.75 & 77.81 & 60.77 \\
        {SLOPE}       & \textbf{80.41} & \textbf{76.81} & \textbf{80.59} & \textbf{78.15} & \textbf{61.32} \\
        \bottomrule
        \end{tabular}
        \end{adjustbox}
        \vspace{0.3cm}
        \captionof{table}{\small{Results of sparse 2:4 quantization on DeiT-small with ImageNet.}}
        \label{tab_deit_nbit}
    \end{minipage}
    \hfill
    \begin{minipage}{0.48\textwidth}
        \centering
        \includegraphics[width=\textwidth]{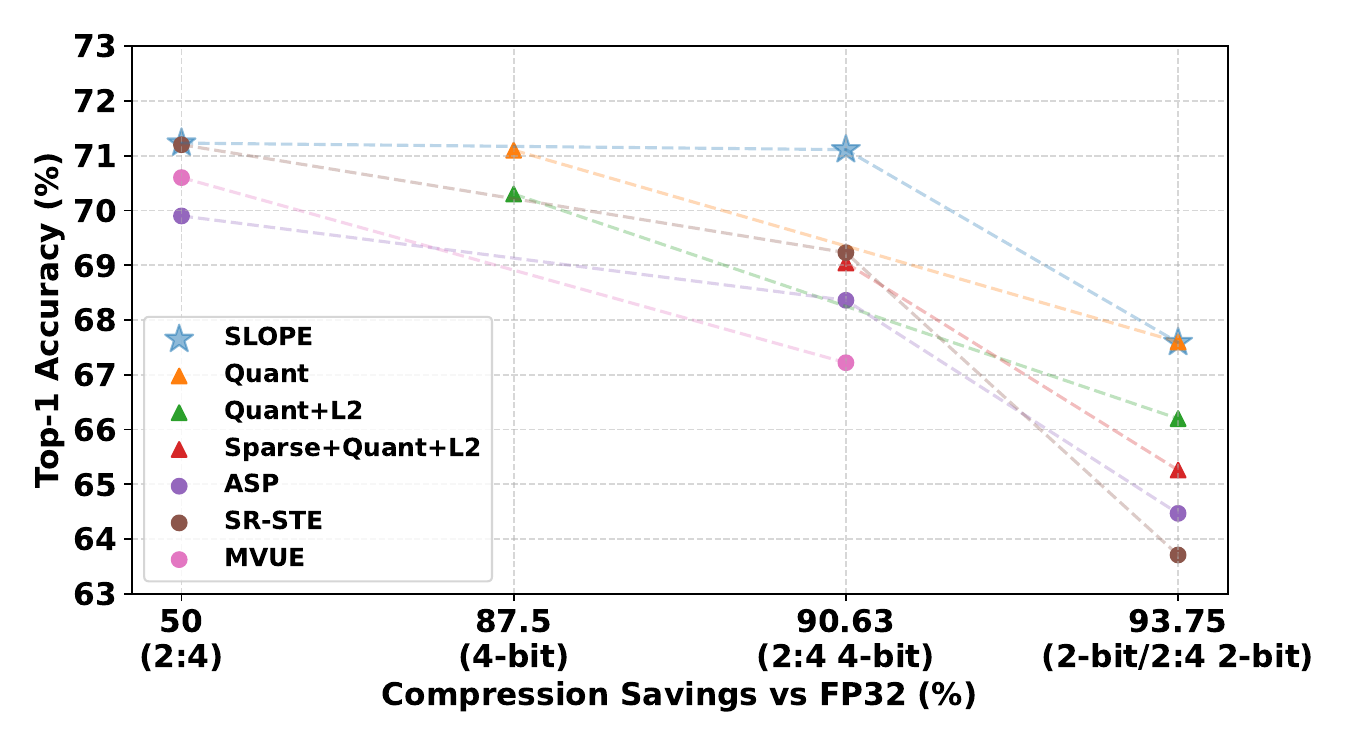} % Replace with actual filename
        \captionof{figure}{\small{Accuracy vs. compression ratio on ResNet-18 models.}}
        \label{fig:slope_vs_all}
    \end{minipage}
    \vspace{-0.3cm}
\end{figure}

Notably, SLOPE shows clear advantages under similar memory budgets: sparse A16/W4 outperforms dense A16/W2 (80.41\% vs. 77.34\%), and sparse A8/W8 exceeds dense A4/W4 (80.59\% vs. 78.15\%) (Table~\ref{tab:2to4_savings_fp32}). In addition, SLOPE's A4/W4 accuracy (78.15\%) exceeds that of dense A2/W2 (75.72\%). These results highlight SLOPE's superior balance between compression and accuracy.

\subsection{Object Detection and Segmentation}

% For object detection tasks, we evaluated Mask R-CNN \cite{he2017maskrcnn} on the MS-COCO dataset \cite{lin2014mscoco} under sparse 2:4 4-bit quantization configurations. Performance degradation due to sparse quantization is significantly mitigated by SLOPE, particularly in the A4/W4 setting. Our proposed SLOPE can significantly reduce inference memory consumption compared to the SR-STE and LBC methods, which rely on full-precision weights.
In this section we measure the effect of SLOPE's compression on object detection and segmentation, and compare it to other 2:4 sparse quantization methods on the COCO dataset.
In Table 4, besides reporting the numbers achieved with finetuning, we also show the performance when training the models from scratch.
In both settings, SLOPE achieves significantly better performance than the baseline~\cite{wu2019detectron2}, even under structured 2:4 sparsity. Specifically, under the A32/W4 setting, SLOPE yields 40.83 box mAP and 37.13 mask mAP, outperforming the baseline (which applies quantization only) by +3.03 and +2.21, respectively.
A similar trend holds for the more challenging A4/W4 configuration, where SLOPE achieves 38.79/35.28 compared to the baseline’s 29.64/27.59, highlighting the robustness of SLOPE even under both quantization and sparsity constraints.

% On the right, SLOPE also outperforms other state-of-the-art sparse training-from-scratch methods such as SR-STE and LBC in both box and mask mAP. Notably, SLOPE achieves comparable or better accuracy while using 2:4 sparse quantization, in contrast to other methods which rely on full-precision weights. This indicates that SLOPE enables high-accuracy detection with substantially lower memory overhead during inference.

\begin{table}[h]
\centering

\begin{minipage}[t]{0.48\linewidth}
\centering
\begin{adjustbox}{width=0.8\textwidth}
\begin{tabular}{llcc}
\toprule
\textbf{Method} & \textbf{Bits}    & \textbf{Box$_{mAP}$} & \textbf{Mask$_{mAP}$} \\
\midrule
Baseline & & 41.0 & 37.2 \\
\midrule
Dense & W4 & 37.80 & 34.92 \\
SLOPE     & W4 & \textbf{40.83} & \textbf{37.13} \\
\midrule
Dense & A4/W4  & 29.64 & 27.59 \\
SLOPE     & A4/W4  & \textbf{38.79} & \textbf{35.28} \\
\bottomrule
\end{tabular}
\end{adjustbox}
\end{minipage}
\hfill
\begin{minipage}[t]{0.48\linewidth}
\centering
\begin{adjustbox}{width=0.8\textwidth}
\begin{tabular}{llcc}
\toprule
\textbf{Method} & \textbf{Bits}    & \textbf{Box$_{mAP}$} & \textbf{Mask$_{mAP}$} \\
\midrule
Baseline & & 41.0 & 37.2 \\
\midrule
SR-STE   & FP.& 39.0 & 35.3 \\
LBC      & FP.& 39.3 & 35.4 \\
SLOPE     & W4& \textbf{39.3} & \textbf{36.2} \\
\bottomrule
\end{tabular}
\end{adjustbox}
\end{minipage}
\label{table:coco_results}
\vspace{0.3cm}
\caption{\small{Results of object detection with structured 2:4 sparsity on Mask R-CNN models. Left: finetuning from pre-trained models. Right: training-from-scratch models. ``Dense'' denotes without applying 2:4 sparsity.
}}
\vspace{-2em}
\end{table}

%  4.3 Analysis (4.3.1 Correlation of weight similarity vs per-
% formance - Table 2, 4.3.2 Ablation cos vs L2? (In progress
\subsection{Analysis}
In this section, we perform additional analysis of SLOPE, studying the effect of weight discrepancy on the performance, and perform the ablation study on the regularization term.
\subsubsection{The Effect of Weight Discrepancy on SLOPE's performance}

Table~\ref{tab_drifts_snmq} demonstrates that reducing weight discrepancy with \textsc{SLOPE} is crucial for achieving strong performance under high compression rates.
Under the 4-bit 2:4 setting, SLOPE improves accuracy from 68.36\% to 71.11\%, surpassing even the full-precision baseline. In the more extreme 2-bit 2:4 case, it improves from 64.47\% to 67.59\%. These gains are supported by alignment metrics: cosine similarity increases from 0.919 to 0.953 (4-bit) and from 0.831 to 0.922 (2-bit); SQNR improves from 8.43\,dB to 11.59\,dB (4-bit) and from 4.84\,dB to 8.92\,dB (2-bit). These results demonstrate that SLOPE effectively reduces both directional and magnitude deviations introduced by sparse low-bit quantization.

\begin{figure}[htbp]
    \centering

    % Left: Table (center-aligned)
    \begin{minipage}[c]{0.48\textwidth}
        \centering
        \begin{adjustbox}{max width=\textwidth}
        \begin{tabular}{lcccc}
        \toprule
        \textbf{Settings} & \textbf{Acc.} & \textbf{Cos} $\pm$ std. & \textbf{SQNR} $\pm$ std. & \textbf{Compression} \\
        \midrule
        2:4 & 69.96 & 0.975 $\pm$ 0.01 & 14.74 $\pm$ 0.98 & 50\% \\
        \midrule
        4-bit & 71.10 & 0.948 $\pm$ 0.01 & 10.51 $\pm$ 1.01 & 87.5\% \\
        2:4, 4-bit & 68.36 & 0.919 $\pm$ 0.01 & 8.43 $\pm$ 1.04 & 90.63\% \\
        2:4, 4-bit, \textsc{SLOPE} & \textbf{71.11} & 0.953 $\pm$ 0.01 & 11.59 $\pm$ 1.09 & 90.63\% \\
        \midrule
        2-bit & 67.60 & 0.878 $\pm$ 0.03 & 7.54 $\pm$ 1.08 & 93.75\% \\
        2:4, 2-bit & 64.47 & 0.831 $\pm$ 0.03 & 4.84 $\pm$ 1.13 & 93.75\% \\
        2:4, 2-bit, \textsc{SLOPE} & \textbf{67.59} & 0.922 $\pm$ 0.02 & 8.92 $\pm$ 1.04 & 93.75\% \\
        \bottomrule
        \end{tabular}
        \end{adjustbox}
        \vspace{0.3cm}
        \captionof{table}{\small{ResNet-18 performance across compression settings with and without \textsc{SLOPE}.}}
        \label{tab_drifts_snmq}
    \end{minipage}
    \hfill
    % Right: Figure (center-aligned)
    \begin{minipage}[c]{0.48\textwidth}
        \centering
        \includegraphics[width=\textwidth]{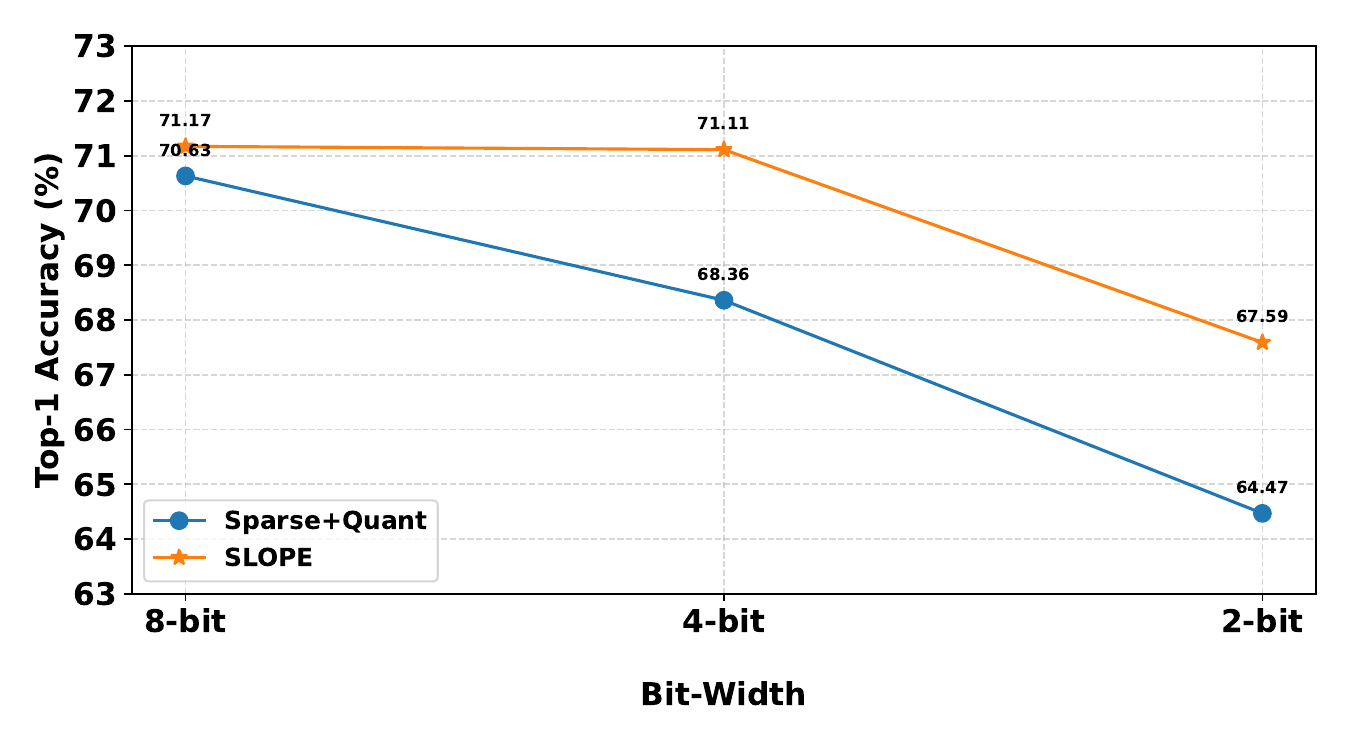}
        \captionof{figure}{\small{Accuracy of 2:4 Sparse+Quant and \textsc{SLOPE} under varying bit-widths on ResNet-18.}}
        \label{fig:left-slope-plot}
    \end{minipage}
    \vspace{-0.5cm}
\end{figure}

\subsubsection{Ablation Study}
% We compare our proposed SLOPE method with a standard \( L_2 \), i.e., $\|\mathbf{w}-\hat{\mathbf{w}}\|$, loss term and a baseline without any additional loss. Across various bit-widths and structured sparsity levels, SLOPE consistently achieves the highest top-1 accuracy. We analyzed the effect of varying sparsity ratios (2:4, 2:8, and 2:16) and bit-width on ResNet-18 models with the ImageNet dataset (Table \ref{tab_ablation_nm_nbit}, \ref{tab:ablation_quant}). 
% We compare our proposed SLOPE method against a standard \(L_2\) loss term, i.e., $\|\mathbf{w} - \hat{\mathbf{w}}\|$, and a baseline without additional regularization under varying sparsity patterns (2:4, 2:8, and 2:16) (Table~\ref{tab_ablation_nm_nbit}) and bit-widths on ResNet-18 models using the ImageNet dataset. (Table~\ref{tab:ablation_quant}). SLOPE consistently achieves the highest top-1 accuracy. 

Here we study the role of our cosine weight regularizer, and compare it to \(L_2\) penalty on weight discrepancy.
We compare SLOPE to the baseline without regularization under varying sparsity patterns (2:4, 2:8, 2:16) and bit-widths (Table~\ref{tab_ablation_nm_nbit}, Table~\ref{tab:ablation_quant}) on ResNet-18 with ImageNet. The results indicate that higher sparsity ratios (e.g., 2:16) which can offer greater computational savings, lead to a more pronounced drop in accuracy. However, the proposed SLOPE method consistently mitigates this degradation, particularly in low-bit quantization settings, demonstrating its ability to preserve accuracy across varying sparsity levels (Table~\ref{tab_ablation_nm_nbit}, Fig.~\ref{fig:left-slope-plot}). 
These improvements highlight the importance of angular (and not absolute) weight alignment for maintaining representational fidelity \cite{LiuNIPS18energy} and justify our $\cos$ regularizer.
\begin{table}[hbpt!]
\centering
\small
\begin{adjustbox}{width=0.7\textwidth}

\begin{tabular}{l *{2}{c}|*{3}{c}|*{3}{c}|*{3}{c}}
\toprule
 \multicolumn{3}{c}{FP} |& \multicolumn{3}{c}{A8/W8} |& \multicolumn{3}{c}{A4/W4} |& \multicolumn{3}{c}{A2/W2} \\
\textbf{N:M}& \textbf{Baseline} & \textbf{SLOPE} & \textbf{Baseline} & \textbf{$L_2$} & \textbf{SLOPE} & \textbf{Baseline} & \textbf{$L_2$} & \textbf{SLOPE} & \textbf{Baseline} & \textbf{$L_2$} & \textbf{SLOPE} \\
\midrule
2:4  & 70.70 & 71.25 & 70.63 & 70.42 & 71.17 & 68.36 & 69.04 & 71.11 & 64.47&65.26&67.59 \\
2:8  & 69.62 & 70.36 & 69.71 & 69.33 & 70.11 & 66.83 & 67.53 & \textbf{68.77} & 59.76&60.65&61.87 \\
2:16 & 66.73 & 67.94 & 66.85 & 66.81 & 67.93 & 65.38 & 66.07 & 66.59 & -&-&- \\
\bottomrule
\end{tabular}
\end{adjustbox}
\vspace{0.3cm}
\caption{\small{Different settings of sparse N:M n-bit quantization on ResNet-18. ``-'' denotes not converged models. ``Baseline'' does not use any additional loss term. \textbf{The 4-bit 2:8 quantization scheme achieves a $19.7\times$ compression ratio while preserving nearly 99\% of the original accuracy (68.77 vs. 69.76).} Please find the compression ratio in Appendix. }}
\vspace{-2em}
\label{tab_ablation_nm_nbit}

\end{table}
\begin{table}[hbpt!]
\centering
\small
 \begin{adjustbox}{max width=0.7\textwidth}
\begin{tabular}{lcccccc}
\toprule
\textbf{Bits} & \textbf{Quant} & \textbf{Quant+$L_2$}~\cite{bin_tiantianhan2021improving} & \textbf{Sparse$_{2:4}$+Quant} & \textbf{Sparse$_{2:4}$+Quant+$L_2$} & \textbf{SLOPE$_{2:4}$} \\
\midrule
2-bit & 67.60 & 66.20 & 64.47 & 65.26 & \textbf{67.59} \\
4-bit & 71.10 & 70.30 & 68.36 & 69.04 & \textbf{71.11} \\
\bottomrule
\end{tabular}
\end{adjustbox}
\vspace{0.3cm}
\caption{\small{Comparison of different quantization and regularization strategies under 2-bit and 4-bit settings with ResNet-18 models.}}
\label{tab:ablation_quant}
\end{table}

Under both 4-bit and 2-bit configurations (Table~\ref{tab:ablation_quant}), SLOPE significantly outperforms \( L_2 \), indicating that aligning weight directions (as SLOPE encourages) is more effective than minimizing Euclidean distance alone. Moreover, SLOPE surpasses the baseline in all settings, validating the effectiveness of our loss design for sparse low-bit quantization.

%  Baseline &104.51 & 1439.13 & 13.93 & 13.21 \\
%   A8/W4, Dense&	197.45&	653.50&	4.79&	3.31 \\
%   A16/W4, 2:4    &240.07 & 660.71 & 4.35 & 2.68 \\
% \bottomrule
% \end{tabular}
% \end{adjustbox}
% \caption{Performance metrics for Llama-2-7B under different settings. ``Mem.'' denotes memory. ``BW.'' denotes bandwidth.}
% \label{tab:llama_metrics}
% \end{table}
\section{Conclusion}
A key contribution of this work is identifying the significant performance degradation when combining weight quantization and structured sparsity for model compression.
We then show how identifying the root cause: high weight discrepancy between compressed and original weights, and fixing it with a novel regularizer, can recover most of the lost performance.
 Through extensive experiments on diverse models and datasets, such as ResNet-18, ViT-small, and Mask R-CNN, we demonstrate that our proposed method (SLOPE) significantly enhances the performance of quantized sparse models. 
 We show that SLOPE is capable of aggressive compression at low bit rates with induced 2:4 structured sparsity, while maintaining most of the original performance.
 We believe that this research uncovers the fundamental mechanisms of how sparsity and quantization interact, and proposes new tools and methods for the research community.
 % Not only does SLOPE reduce the cosine distance between quantized and full-precision weights, it also ensures minimal accuracy loss under aggressive low-bit 2:4 structured sparse quantization. These findings highlights its potential as a state-of-the-art solution for efficiency-focused deep learning.

\bibliography{egbib}
\newpage
\section*{Appendix}
\subsection*{Sparse-only Results}
Table~\ref{tab_sparse_deit} compares our proposed $L_{reg}$ with other leading sparse 2:4 training methods such as Bi-Mask~\cite{zhang2023bi}, SR-STE\cite{sparsenm_zhou_2021}, T-Mask\cite{hubara2021accelerated_tmask}, LBC~\cite{zhang2022learning}, and S-STE~\cite{hu2024sste}. Our method achieves the best Top-1 accuracy of 80.7\%, outperforming all baselines.

\begin{table}[ht]
\centering
\begin{adjustbox}{max width=0.5\textwidth}
\begin{tabular}{ccccccc}
\toprule
 \textbf{ASP}\cite{mishra2021accelerating} & \textbf{Bi-Mask} & \textbf{SR-STE} & \textbf{T-Mask} & \textbf{LBC} & \textbf{S-STE} & \textbf{$L_{reg}$} \\
\midrule
 79.9 & 77.6 & 79.6 & 71.5 & 78.0 & 78.5 & \textbf{80.7} \\
\bottomrule
\end{tabular}
\end{adjustbox}
    \vspace{0.5em}
    \caption{Results of 2:4 sparse-only on DeiT-small models with ImageNet dataset.}
    \label{tab_sparse_deit}
\end{table}
% \vspace{-0.8em}

% \begin{table}[h]
% \centering
% \small
% \begin{tabular}{lccc}
% \toprule
% \textbf{Bits} & \textbf{2} & \textbf{4} & \textbf{8} \\
% \midrule
% Quant       & 67.60 & 71.10 & \textbf{71.14} \\
% Quant + $L_2$   & 67.81 & 71.31 & \textbf{71.35} \\
% Quant + Cos & 68.41 & 71.42 & \textbf{71.55} \\
% \bottomrule
% \end{tabular}
% \caption{Comparison of quantization methods at 2-, 4-, and 8-bit precision.}
% \label{tab:quant_comparison}
% \end{table}
\subsection*{Efficiency of $n$-bit Structured 2:4 Sparsity}

Implementing $n$-bit structured $2$:$4$ sparsity yields significant memory savings and computational speedups~\cite{mishra2021accelerating}. With 8-bit quantization, this yields up to $2\times$ speedup with negligible accuracy loss~\cite{mishra2021accelerating,hu2024sste,sparsenm_zhou_2021}, while 4-bit quantization achieves up to $4\times$ speedup~\cite{frantar2025marlin}. Table~\ref{tab:2to4_savings_fp32} summarizes the storage compression ratio relative to 32-bit dense weights.
 % When combined with 8-bit quantization, this approach can achieve up to $2\times$ speedup with negligible accuracy loss~\cite{mishra2021accelerating,hu2024sste,sparsenm_zhou_2021}, and 4-bit quantization can deliver up to $\times$ speedup~\cite{frantar2025marlin}.

\begin{table}[h]
\centering
\begin{adjustbox}{max width=0.8\textwidth}

\begin{tabular}{clccc}
\toprule
\textbf{Bitwidth ($n$)} & \textbf{2:4 Sparse (bits)} & \textbf{Savings vs FP32} & \textbf{Formula}&\textbf{Compression Ratio} \\
\midrule
% 16-bit & $4 \times 32 = 128$ & $2 \times 16 + 4 = 36$  & 71.88\% & $\frac{128 - 36}{128}$ \\
8-bit   & $2 \times 8 + 4 = 20$   & 84.38\% & $\frac{128 - 20}{128}$&6.4$\times$ \\
4-bit   & $2 \times 4 + 4 = 12$   & 90.63\% & $\frac{128 - 12}{128}$&10.7$\times$ \\
2-bit  & $2 \times 2 + 4 = 8$    & 93.75\% & $\frac{128 - 8}{128}$ &16$\times$\\
\bottomrule
\end{tabular}
\end{adjustbox}
\vspace{0.5em}
\caption{Storage per 4-weight block and compression savings under structured $2$:$4$ sparsity ~\cite{mishra2021accelerating}, normalized against 32-bit dense baseline (128 bits per 4 weights).}
\label{tab:2to4_savings_fp32}
\end{table}
\begin{table}[h]
\centering
\small
\begin{adjustbox}{max width=0.8\textwidth}
\begin{tabular}{lcccc}
\toprule
\textbf{Bitwidth $(n)$} & \textbf{2:8 Sparse (bits)} & \textbf{Savings vs FP32} & \textbf{Formula} & \textbf{Compression Ratio} \\
\midrule
8-bit & $2 \times 8 + 5 = 21$ & 91.80\% & $\frac{256 - 21}{256}$ & $12.2\times$ \\
4-bit & $2 \times 4 + 5 = 13$ & 94.92\% & $\frac{256 - 13}{256}$ & $19.7\times$ \\
2-bit & $2 \times 2 + 5 = 9$  & 96.48\% & $\frac{256 - 9}{256}$  & $28.4\times$ \\
\bottomrule
\end{tabular}
\end{adjustbox}
\vspace{0.5em}
\caption{Storage per 8-weight block and compression savings under structured 2:8 sparsity, normalized against 32-bit dense baseline (256 bits per 8 weights).}
\label{tab:compression_2to8}
\end{table}

% \begin{table}[h]
% \centering
% \small
% \begin{tabular}{lcccc}
% \toprule
% \textbf{Bitwidth ($n$)} & \textbf{Dense (bits)} & \textbf{2:4 Sparse (bits)} & \textbf{Savings} & \textbf{Formula} \\
% \midrule
% 16-bit  & $4 \times 16 = 64$ & $2 \times 16 + 4 = 36$ & 43.75\% & $\frac{64 - 36}{64}$ \\
% 8-bit   & $4 \times 8 = 32$  & $2 \times 8 + 4 = 20$  & 37.5\%  & $\frac{32 - 20}{32}$ \\
% 4-bit   & $4 \times 4 = 16$  & $2 \times 4 + 4 = 12$  & 25.0\%  & $\frac{16 - 12}{16}$ \\
% 2-bit   & $4 \times 2 = 8$   & $2 \times 2 + 4 = 8$   & 0.0\%   & $\frac{8 - 8}{8}$ \\
% \bottomrule
% \end{tabular}
% \caption{Storage per 4-weight block and corresponding savings under structured $2$:$4$ sparsity.}
% \label{tab:2to4_savings}
% \end{table}

% \begin{table}[h]
% \centering
% \begin{adjustbox}{max width=0.7\textwidth}
% \begin{tabular}{lcccl}
% \toprule
% \textbf{Bitwidth ($n$)} & \textbf{Value Bits} & \textbf{Metadata} & \textbf{Total Bits} & \textbf{Savings vs FP32} \\
% \midrule
% 2-bit   & $2\times2=4$   & 4 & 8   & 75.0\% \quad ($32 \rightarrow 8$) \\
% 4-bit   & $2\times4=8$   & 4 & 12  & 62.5\% \quad ($32 \rightarrow 12$) \\
% 8-bit   & $2\times8=16$  & 4 & 20  & 37.5\% \quad ($32 \rightarrow 20$) \\
% % 16-bit  & $2\times16=32$ & 4 & 36  & 43.8\% \quad ($64 \rightarrow 36$) \\
% \bottomrule
% \end{tabular}
% \end{adjustbox}
% \caption{Storage per 4-weight block under $n$-bit structured 2:4 sparsity.}
% \label{tab:2to4_savings_fp32}
% \end{table}

\subsection*{Training Details}
The proposed method is in Algorithm~\ref{alg:training}. The overall process can be summarized as follows: Taking pre-trained model weights as initialization, training the model with $L_{reg}(\mathbf{W},\mathbf{\widehat{W}})$ (Eq.~\eqref{eq:loss}) to reduce the weight discrepancy and updating the weights and other parameters through STE. The scaling factors of $ s_x $ and $ s_w $ are initialized and updated by using the LSQ method. When training the ResNet-18 model with 8$\times$V100, each epoch takes about 6 minutes. It takes about ${MAX\_EPOCH}=120$ epochs to obtain a sparse quantized models.  We follow Nvidia's hyper-parameter settings and training code \footnote{https://github.com/NVIDIA/DeepLearningExamples}. For the DeiT vision transformer, we apply our method to the original training code \cite{wu2019detectron2} and follow its settings. %The introduction of an additional loss term requires extra computational resources to calculate the cosine distance. However, thanks to the powerful performance of GPUs, we observed no significant increase in training time when comparing the training process with and without the cosine distance calculation. For Llama3.2 1B model with Alpaca dataset, the training takes about 30 minutes for each epoch.
\begin{algorithm}

% \label{al:algorithm1}
	\caption{Sparse quantization training approach}
	\label{alg:training}
	\begin{algorithmic}[1]
		% \STATE\textbf{2. Regular LSQ Finetuning:}
		% \STATE $\bar\Delta=\texttt{T}(t_2)$ 
		\WHILE{\textit{epoch $<$ {MAX\_EPOCH}}} \label{lst:line:16}
        \STATE{
		$\mathbf{\hat{x}}=q(\mathbf{x};s_x,b)$ }
		\STATE{
		$\widehat{\mathbf{W}}=q(S(\mathbf{W};N,M);s_w,b)$ } 
	    \STATE{
	    $\mathbf{y}={\widehat{\mathbf{W}}}^\top\mathbf{\hat{x}}$ %\Comment{hyperspherical projection}
	    }
        \STATE{
            $J(\mathbf{W})= L(\widehat{\mathbf{W}})+\lambda L_{reg}(\mathbf{W},\widehat{\mathbf{W}})$ 
            % \Comment{Eq.~\eqref{eq:loss}}
	    }
	    \STATE{
	    Get $\frac{\partial J}{\partial {\mathbf{{W}}}}$ via STE to update $\mathbf{{W}}$, $s_x$, and $s_w$.% 
	    }
		\ENDWHILE
	
	\end{algorithmic}
\end{algorithm}

\subsection*{Error Bound for Structured Sparse Quantization}
\begin{theorem}[Structured 2:4 Sparse Quantization Bounds]\label{thm:24-bounds}
Let $\mathbf{w}\in\mathbb{R}^{d}$ and obtain $\hat{\mathbf{w}}$ by structured
$2{:}4$ sparsification, i.e.\ in every 4-element block we keep the two
largest-magnitude entries and set the other two to zero.  
Let the angle between $\mathbf{w}$ and $\hat{\mathbf{w}}$ be
$\theta\!\in[0,\frac{\pi}{2}]$ (so $\cos\theta=\tfrac{\mathbf{w}^{\top}\hat{\mathbf{w}}}
{\|\mathbf{w}\|_{2}\,\|\hat{\mathbf{w}}\|_{2}}$).  
Then
\[
\boxed{\;
  \|\mathbf{w}\|_{2}^{2}\sin^{2}\theta
  \;\;\le\;\;
  \|\mathbf{w}-\hat{\mathbf{w}}\|_{2}^{2}
  \;\;\le\;\;
  2\,\|\mathbf{w}\|_{2}^{2}(1-\cos\theta)
\;}
\tag{A}
\]
and, necessarily,
\[
\boxed{\;
  \cos\theta \;\ge\; \frac{1}{\sqrt{2}}
  \quad\Longleftrightarrow\quad
  \theta\;\le\;45^{\circ}
\;}
\tag{B}
\]
\end{theorem}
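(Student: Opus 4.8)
The plan is to exploit the fact that structured $2{:}4$ sparsification merely zeros out entries without rescaling the survivors, so $\hat{\mathbf{w}}$ agrees with $\mathbf{w}$ on the kept support $S$ (the union of top-two indices across all 4-blocks) and vanishes off $S$. First I would record the resulting identity $\mathbf{w}^{\top}\hat{\mathbf{w}}=\sum_{i\in S}w_i^{2}=\|\hat{\mathbf{w}}\|_{2}^{2}$, which immediately gives the clean relation $\cos\theta=\|\hat{\mathbf{w}}\|_{2}/\|\mathbf{w}\|_{2}$. This single observation (note in particular that $\mathbf{w}^{\top}\hat{\mathbf{w}}\ge 0$, so $\theta\in[0,\tfrac{\pi}{2}]$ is automatic) drives everything that follows.

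For statement (A), I would observe that the claimed lower bound is in fact an \emph{equality}. Since $\mathbf{w}-\hat{\mathbf{w}}$ is supported on the complement of $S$, a Pythagorean split yields
\[
\|\mathbf{w}-\hat{\mathbf{w}}\|_{2}^{2}=\|\mathbf{w}\|_{2}^{2}-\|\hat{\mathbf{w}}\|_{2}^{2}=\|\mathbf{w}\|_{2}^{2}\bigl(1-\cos^{2}\theta\bigr)=\|\mathbf{w}\|_{2}^{2}\sin^{2}\theta.
\]
The upper bound then reduces to the elementary trigonometric inequality $\sin^{2}\theta=(1-\cos\theta)(1+\cos\theta)\le 2(1-\cos\theta)$, which holds because $1+\cos\theta\le 2$. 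So both halves of (A) collapse to factoring $\sin^2\theta$ once the survivor identity is in hand.

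For statement (B), the key step is a per-block energy argument. Within each group of four entries, sorting the squared magnitudes as $v_{(1)}\ge v_{(2)}\ge v_{(3)}\ge v_{(4)}$, retaining the two largest keeps $v_{(1)}+v_{(2)}\ge v_{(3)}+v_{(4)}$, i.e.\ at least half of the block's squared energy. Summing over all blocks gives $\|\hat{\mathbf{w}}\|_{2}^{2}\ge\tfrac{1}{2}\|\mathbf{w}\|_{2}^{2}$, hence $\cos^{2}\theta=\|\hat{\mathbf{w}}\|_{2}^{2}/\|\mathbf{w}\|_{2}^{2}\ge\tfrac{1}{2}$ and therefore $\cos\theta\ge 1/\sqrt{2}$ (using $\cos\theta\ge 0$). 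The stated equivalence with $\theta\le 45^{\circ}$ is then just monotonicity of cosine on $[0,\tfrac{\pi}{2}]$.

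There is no serious analytic obstacle here; once the identity $\mathbf{w}^{\top}\hat{\mathbf{w}}=\|\hat{\mathbf{w}}\|_{2}^{2}$ is established, the proof is short. The one point that must be handled carefully is the per-block "top-two captures half the energy" claim underpinning (B): it crucially uses that the retained entries are the \emph{largest in magnitude} rather than an arbitrary pair, and the bound must be applied blockwise and only then summed, so that the factor $\tfrac{1}{2}$ survives aggregation over blocks of differing energies.
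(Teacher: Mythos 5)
Your proof is correct, and for part (A) it takes a genuinely different---and sharper---route than the paper. The paper treats $\mathbf{w}$ and $\hat{\mathbf{w}}$ as generic vectors and applies the law-of-cosines decomposition $\|\mathbf{w}-\hat{\mathbf{w}}\|_{2}^{2}=\bigl(\|\mathbf{w}\|_{2}\sin\theta\bigr)^{2}+\bigl(\|\hat{\mathbf{w}}\|_{2}-\|\mathbf{w}\|_{2}\cos\theta\bigr)^{2}$, obtaining the lower bound by discarding the second square and the upper bound by invoking $\|\hat{\mathbf{w}}\|_{2}\le\|\mathbf{w}\|_{2}$. You instead exploit the structural identity $\mathbf{w}^{\top}\hat{\mathbf{w}}=\|\hat{\mathbf{w}}\|_{2}^{2}$ (survivors are copied verbatim, the rest are zeroed), which gives $\cos\theta=\|\hat{\mathbf{w}}\|_{2}/\|\mathbf{w}\|_{2}$ and shows that the lower bound in (A) is in fact an \emph{equality}, $\|\mathbf{w}-\hat{\mathbf{w}}\|_{2}^{2}=\|\mathbf{w}\|_{2}^{2}\sin^{2}\theta$, with the upper bound reduced to the triviality $1+\cos\theta\le 2$. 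This buys several things the paper's argument does not: it certifies that the lower bound is tight for every $\mathbf{w}$, it makes the assumption $\theta\in[0,\tfrac{\pi}{2}]$ a consequence rather than a hypothesis (since $\mathbf{w}^{\top}\hat{\mathbf{w}}\ge 0$), and it quantifies the slack in the upper bound exactly as $\|\mathbf{w}\|_{2}^{2}(1-\cos\theta)^{2}$, which also explains the $\mathcal{O}(\theta^{4})$ gap in the paper's tightness proposition. For part (B) both proofs rest on the same per-block ``top two keep at least half the energy'' lemma, but your derivation through $\cos\theta=\|\hat{\mathbf{w}}\|_{2}/\|\mathbf{w}\|_{2}$ is also the more watertight one: the paper's final display lower-bounds the quotient $\tfrac{\mathbf{w}^{\top}\hat{\mathbf{w}}}{\|\mathbf{w}\|_{2}\|\hat{\mathbf{w}}\|_{2}}$ by simultaneously lower-bounding the numerator and substituting the \emph{lower} bound $\|\mathbf{w}\|_{2}/\sqrt{2}$ for the denominator factor $\|\hat{\mathbf{w}}\|_{2}$---a step that is invalid for a generic quotient (shrinking the denominator enlarges a fraction) and is only rescued by the very correlation $\mathbf{w}^{\top}\hat{\mathbf{w}}=\|\hat{\mathbf{w}}\|_{2}^{2}$ that you state explicitly at the outset.
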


\begin{proof}
The proof has two short ingredients.

\paragraph{1.\ 2:4 blocks preserve \(\ge\!\frac12\) of the energy.}
Fix a 4-vector
$\mathbf{z}=(z_1,z_2,z_3,z_4)$ and, w.l.o.g., relabel so that
$|z_1|\!\ge\!|z_2|\!\ge\!|z_3|\!\ge\!|z_4|$.  
Averaging tells us
\[
\frac{z_1^{2}+z_2^{2}}{2}
\;\ge\;
\frac{z_1^{2}+z_2^{2}+z_3^{2}+z_4^{2}}{4}
\;=\;\frac14\|\mathbf{z}\|_{2}^{2},
\]
so
$z_1^{2}+z_2^{2}\ge\tfrac12\|\mathbf{z}\|_{2}^{2}$.  
Applying this block-by-block and summing yields
\[
\boxed{\;
  \|\hat{\mathbf{w}}\|_{2}^{2}\;\ge\;\tfrac12\|\mathbf{w}\|_{2}^{2}
\;}
\quad\Longrightarrow\quad
\|\hat{\mathbf{w}}\|_{2}\;\ge\;\tfrac{1}{\sqrt{2}}\|\mathbf{w}\|_{2}.
\tag{1}
\]

\paragraph{2.\  Standard law-of-cosines decomposition.}
For any pair of vectors with angle $\theta$,
\[
\|\mathbf{w}-\hat{\mathbf{w}}\|_{2}^{2}
=\|\mathbf{w}\|_{2}^{2}+\|\hat{\mathbf{w}}\|_{2}^{2}
-2\|\mathbf{w}\|_{2}\|\hat{\mathbf{w}}\|_{2}\cos\theta
=\bigl(\|\mathbf{w}\|_{2}\sin\theta\bigr)^{2}
+\bigl(\|\hat{\mathbf{w}}\|_{2}-\|\mathbf{w}\|_{2}\cos\theta\bigr)^{2}.
\tag{2}
\]

\paragraph{Lower bound.}
Since the second square in (2) is non-negative,
\[
\|\mathbf{w}-\hat{\mathbf{w}}\|_{2}^{2}\;\ge\;
\|\mathbf{w}\|_{2}^{2}\sin^{2}\theta,
\]
which is exactly the left-hand inequality in (A).  
(The $\tfrac12$ factor that appeared in the draft is unnecessary—it only
weakens the bound.)

\paragraph{Upper bound.}
Use $\|\hat{\mathbf{w}}\|_{2}\le\|\mathbf{w}\|_{2}$ (sparsification never
increases the norm) in the first form of (2):
\[
\|\mathbf{w}-\hat{\mathbf{w}}\|_{2}^{2}
\le
\|\mathbf{w}\|_{2}^{2}+\|\mathbf{w}\|_{2}^{2}
-2\|\mathbf{w}\|_{2}^{2}\cos\theta
=2\,\|\mathbf{w}\|_{2}^{2}(1-\cos\theta),
\]
giving the right-hand inequality in (A).

\paragraph{Angular constraint.}
Combine the Cauchy lower bound $\mathbf{w}^{\top}\hat{\mathbf{w}}
\ge\tfrac12\|\mathbf{w}\|_{2}^{2}$ (sum of preserved squares per block)
with (1):
\[
\cos\theta
=\frac{\mathbf{w}^{\top}\hat{\mathbf{w}}}{\|\mathbf{w}\|_{2}\|\hat{\mathbf{w}}\|_{2}}
\;\ge\;
\frac{\tfrac12\|\mathbf{w}\|_{2}^{2}}
{\|\mathbf{w}\|_{2}\cdot(\|\mathbf{w}\|_{2}/\sqrt2)}
=\frac1{\sqrt2},
\]
establishing (B).  Equality occurs when each 4-block looks like
$(a,a,a,a)$ up to sign, matching the intuitive “worst-case’’ example.

% \paragraph{Tightness.}
% \begin{itemize}\setlength{\itemsep}{0pt}
% \item The \emph{upper} bound is attained whenever
% $\hat{\mathbf{w}}$ is a positive scalar multiple of $\mathbf{w}$ (perfect
% alignment, $\theta\!\to\!0$).
% \item The \emph{lower} bound is attained when
% $\|\hat{\mathbf{w}}\|_{2}=\|\mathbf{w}\|_{2}/\sqrt2$ and the component
% orthogonal to $\mathbf{w}$ carries all the dropped energy
% (e.g.\ the $(1,1,1,1)$ toy block).
% \end{itemize}
\end{proof}

\subsection*{Tightness}
%========================================================
%  Quadratic Collapse of 2:4 Sparse Quantization Bounds
%========================================================
\begin{proposition}[Both bounds coalesce as \(\theta\to0\)]
For the setting of Theorem~\ref{thm:24-bounds},
denote the quantization error by 
\(E(\theta)=\|\mathbf{w}-\hat{\mathbf{w}}\|^{2}_{2}\)
and recall the bounds
\[
L(\theta):=\|\mathbf{w}\|_{2}^{2}\sin^{2}\theta
\;\le\;
E(\theta)
\;\le\;
U(\theta):=2\|\mathbf{w}\|_{2}^{2}\bigl(1-\cos\theta\bigr).
\]
Then, as \(\theta\to0\),
\[
L(\theta)
=
\|\mathbf{w}\|_{2}^{2}\theta^{2}
\;+\; \mathcal{O}\!\left(\theta^{4}\right),
\qquad
U(\theta)
=
\|\mathbf{w}\|_{2}^{2}\theta^{2}
\;+\; \mathcal{O}\!\left(\theta^{4}\right),
\]
and hence
\[
U(\theta)-L(\theta)\;=\;\mathcal{O}\!\left(\theta^{4}\right).
\]
Consequently, minimising the surrogate loss
\(1-\cos\theta\) (equivalently, \(\tfrac12\theta^{2}+o(\theta^{2})\))
drives the \emph{true} quantisation error
\(E(\theta)\) down quadratically in \(\theta\),
while simultaneously squeezing the gap between the lower
and upper analytical bounds at the even faster
quartic rate \(\theta^{4}\) (Fig.~\ref{fig:lower_upper_bound}).
\begin{figure}[hbpt!]
    \centering
    \includegraphics[width=0.5\linewidth]{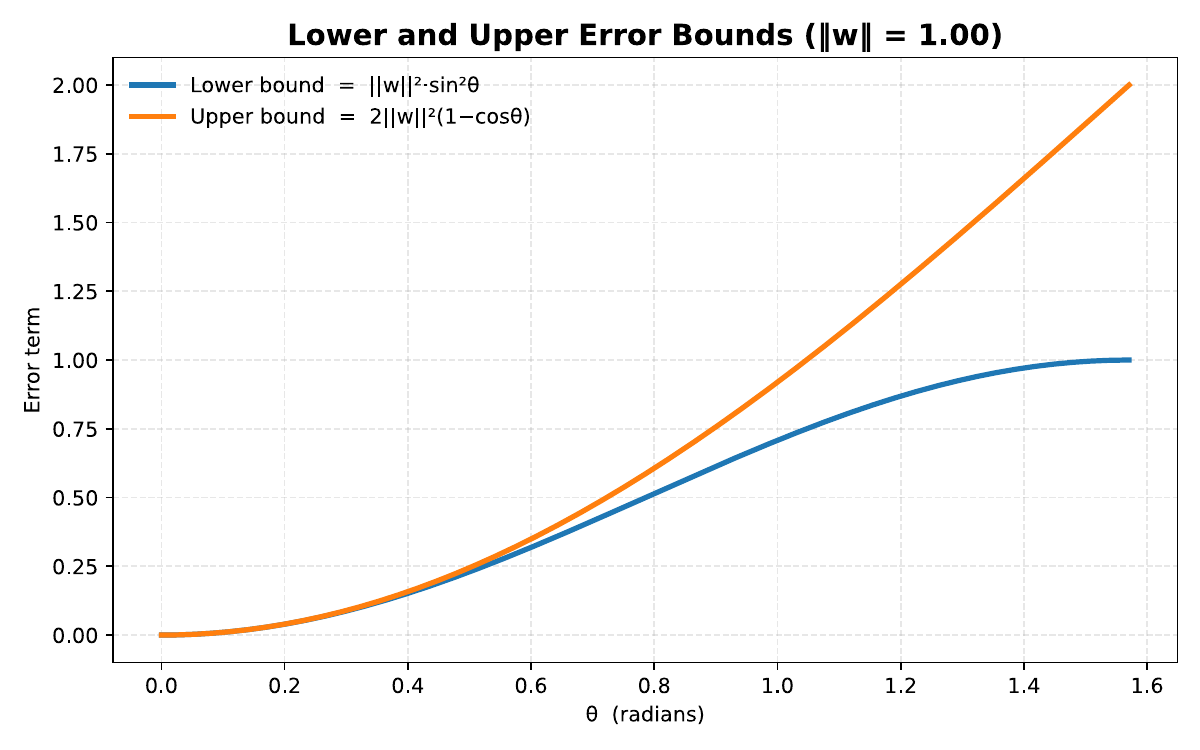}
    \caption{The lower and upper bound trend when minimizing $\theta$.}
    \label{fig:lower_upper_bound}
\end{figure}
\end{proposition}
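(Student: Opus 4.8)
The plan is to treat this as a routine Taylor-expansion exercise around $\theta=0$, since the claim is purely about the local behaviour of two smooth trigonometric functions of $\theta$ (the weight norm $\|\mathbf{w}\|_{2}^{2}$ is a constant that factors out of everything). First I would expand the lower bound: writing $\sin\theta=\theta-\tfrac{\theta^{3}}{6}+\mathcal{O}(\theta^{5})$ and squaring gives $\sin^{2}\theta=\theta^{2}-\tfrac{\theta^{4}}{3}+\mathcal{O}(\theta^{6})$, so that $L(\theta)=\|\mathbf{w}\|_{2}^{2}\theta^{2}+\mathcal{O}(\theta^{4})$. Next I would expand the upper bound using $\cos\theta=1-\tfrac{\theta^{2}}{2}+\tfrac{\theta^{4}}{24}-\mathcal{O}(\theta^{6})$, which yields $1-\cos\theta=\tfrac{\theta^{2}}{2}-\tfrac{\theta^{4}}{24}+\mathcal{O}(\theta^{6})$ and hence $U(\theta)=2\|\mathbf{w}\|_{2}^{2}(1-\cos\theta)=\|\mathbf{w}\|_{2}^{2}\theta^{2}+\mathcal{O}(\theta^{4})$. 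These two expansions establish the first two displayed asymptotics directly.

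For the gap, rather than subtracting the two series term by term, I would exploit a clean exact identity that sidesteps any bookkeeping. Using $\sin^{2}\theta=1-\cos^{2}\theta=(1-\cos\theta)(1+\cos\theta)$ one finds
\[
U(\theta)-L(\theta)=\|\mathbf{w}\|_{2}^{2}\bigl[\,2(1-\cos\theta)-(1-\cos\theta)(1+\cos\theta)\,\bigr]=\|\mathbf{w}\|_{2}^{2}(1-\cos\theta)^{2}.
\]
Since $1-\cos\theta=\tfrac{\theta^{2}}{2}+\mathcal{O}(\theta^{4})$, squaring immediately gives $U(\theta)-L(\theta)=\|\mathbf{w}\|_{2}^{2}\tfrac{\theta^{4}}{4}+\mathcal{O}(\theta^{6})=\mathcal{O}(\theta^{4})$, which is the third and final claim. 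This identity also makes the geometric message transparent: the two analytical bounds differ by exactly the square of the surrogate loss $1-\cos\theta$ (times $\|\mathbf{w}\|_{2}^{2}$), so driving the surrogate to zero squeezes the bracket quadratically and the bound-gap quartically.

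Honestly, there is no real obstacle here; the statement is elementary once one commits to expanding to fourth order, and the exact identity for $U-L$ removes even the need to track the competing quartic coefficients $\tfrac{1}{3}$ and $\tfrac{1}{24}$. The only point requiring mild care is the Landau bookkeeping: I would make sure the remainder in each expansion is genuinely $\mathcal{O}(\theta^{4})$ (resp.\ $\mathcal{O}(\theta^{6})$ for the squared quantity) on the compact interval $\theta\in[0,\tfrac{\pi}{2}]$ supplied by Theorem~\ref{thm:24-bounds}, which follows from Taylor's theorem with remainder since all derivatives of $\sin$ and $\cos$ are bounded. No assumption beyond the setting of that theorem is needed.
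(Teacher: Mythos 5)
Your proof is correct, and for the first two expansions it follows the same route as the paper: Maclaurin series for $\sin\theta$ and $\cos\theta$, giving $L(\theta)=\|\mathbf{w}\|_{2}^{2}\theta^{2}+\mathcal{O}(\theta^{4})$ and $U(\theta)=\|\mathbf{w}\|_{2}^{2}\theta^{2}+\mathcal{O}(\theta^{4})$. Where you genuinely depart from the paper is the gap claim. The paper simply subtracts the two asymptotic expansions and observes that the leading $\theta^{2}$ terms cancel, leaving $\mathcal{O}(\theta^{4})$; this is valid but purely asymptotic. You instead establish the exact algebraic identity
\[
U(\theta)-L(\theta)
=\|\mathbf{w}\|_{2}^{2}\bigl[\,2(1-\cos\theta)-(1-\cos^{2}\theta)\,\bigr]
=\|\mathbf{w}\|_{2}^{2}\,(1-\cos\theta)^{2},
\]
which holds for every $\theta$, not just in the limit. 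This buys you several things the paper's subtraction does not: the explicit leading constant $\tfrac14$ in $U-L=\|\mathbf{w}\|_{2}^{2}\tfrac{\theta^{4}}{4}+\mathcal{O}(\theta^{6})$; a global (non-asymptotic) verification that $U\ge L$; and a clean conceptual reading — the gap between the two bounds is exactly the squared surrogate loss $(1-\cos\theta)^{2}$ scaled by $\|\mathbf{w}\|_{2}^{2}$, which makes the "quartic squeezing" claim immediate rather than an artifact of series bookkeeping. Your closing remark about uniform remainder control on $[0,\tfrac{\pi}{2}]$ is a point the paper glosses over entirely; it is mild but correct to flag. The only cosmetic blemish is writing $\cos\theta = 1-\tfrac{\theta^{2}}{2}+\tfrac{\theta^{4}}{24}-\mathcal{O}(\theta^{6})$, where the sign attached to a Landau symbol is meaningless; write $+\,\mathcal{O}(\theta^{6})$.
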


\begin{proof}
A second-order Maclaurin expansion of the elementary
trigonometric functions yields, for \(\theta\to0\),
\[
\sin\theta
= \theta - \tfrac{\theta^{3}}{6} + \mathcal{O}\!\left(\theta^{5}\right),
\qquad
\cos\theta
= 1 - \tfrac{\theta^{2}}{2} + \tfrac{\theta^{4}}{24}
      + \mathcal{O}\!\left(\theta^{6}\right).
\]

\paragraph{Lower bound.}
\[
L(\theta)
=\|\mathbf{w}\|_{2}^{2}\sin^{2}\theta
=\|\mathbf{w}\|_{2}^{2}
  \bigl(\theta - \tfrac{\theta^{3}}{6}
        + \mathcal{O}(\theta^{5})\bigr)^{2}
=\|\mathbf{w}\|_{2}^{2}\theta^{2}
  +\mathcal{O}\!\left(\theta^{4}\right).
\]

\paragraph{Upper bound.}
\[
U(\theta)
=2\|\mathbf{w}\|_{2}^{2}\bigl(1-\cos\theta\bigr)
=2\|\mathbf{w}\|_{2}^{2}
  \bigl(\tfrac{\theta^{2}}{2}-\tfrac{\theta^{4}}{24}
        +\mathcal{O}(\theta^{6})\bigr)
=\|\mathbf{w}\|_{2}^{2}\theta^{2}
  +\mathcal{O}\!\left(\theta^{4}\right).
\]

\paragraph{Gap between bounds.}
Subtracting the two expansions shows
\(U(\theta)-L(\theta)=\mathcal{O}(\theta^{4})\).
Thus both analytical bounds converge to the same
leading-order term \(\|\mathbf{w}\|_{2}^{2}\theta^{2}\),
while their separation shrinks \emph{two orders faster}
than the error itself .

\end{proof}

\end{document}